%
%
%
%

\documentclass[runningheads,a4paper]{llncs}

\usepackage{amsmath}

\usepackage{amsfonts}
\usepackage{amssymb}
\setcounter{tocdepth}{3}
\usepackage{graphicx}

\usepackage{url}
\urldef{\mailsa}\path|ebauman@markovprocesses.com, kbauman@stern.nyu.edu|
\urldef{\mailsb}\path|anna.kramer, leonie.kunz, christine.reiss, nicole.sator,|
\urldef{\mailsc}\path|erika.siebert-cole, peter.strasser, lncs}@springer.com|

\begin{document}

\mainmatter  

\title{Discovering the Graph Structure in the Clustering Results}

\titlerunning{Discovering the Graph Structure in the Clustering Results}

%
%
\author{Evgeny Bauman\thanks{Markov Processes Inc.}
\and Konstantin Bauman\thanks{Stern School of Business New York University.}}

\authorrunning{E. Bauman, K. Bauman}

\institute{
\mailsa\\
}

%
%

\toctitle{Discovering the Graph Structure in the Clustering Results}
\maketitle

\begin{abstract}
In a standard cluster analysis, such as k-means, in addition to clusters locations and distances between them, it's important to know if they are connected or well separated from each other. The main focus of this paper is discovering the relations between the resulting clusters. We propose a new method which is based on pairwise overlapping k-means clustering, that in addition to means of clusters provides the graph structure of their relations. The proposed method has a set of parameters that can be tuned in order to control the sensitivity of the model and the desired relative size of the pairwise overlapping interval between means of two adjacent clusters, i.e., {\it level of overlapping.} We present the exact formula for calculating that parameter. The empirical study presented in the paper demonstrates that our approach works well not only on toy data but also compliments standard clustering results with a reasonable graph structure on real datasets, such as financial indices and restaurants.
\end{abstract}

\section{Introduction}\label{intro}

The traditional clustering problem consists of assigning each element to a single cluster such that similar elements are grouped into the same cluster.  One of the most popular clustering algorithms is $k$-means. The main idea of this algorithm is to assign each element to the cluster with the nearest mean, serving as a prototype of the resulting cluster. K-means is a classical algorithm that is widely applied and works well in most of the real data-mining problems.

In additional to standard clustering results, that include the locations of clusters' means and elements assignment, it is useful to find the relations between clusters. Some applications might benefit from the knowledge of this graph of clusters' relations. For example, it could be used in a news categorization problem for the recommendations purposes. If we know which categories are interesting for a particular user, we might recommend a news article from the related category. Another example is biological data, where the graph of clusters relations would help to discover hidden relations between genes.

The simplest way to construct the relations between clusters is to compute the standard euclidean distance measure between the means of clusters in the feature space. However, such distance does not reflect the actual relation hidden in the data. Another way to discover such relations is to find if there is an overlapping between these two clusters or they are well separated.

\begin{figure}
\centering
\includegraphics[width=0.99\linewidth]{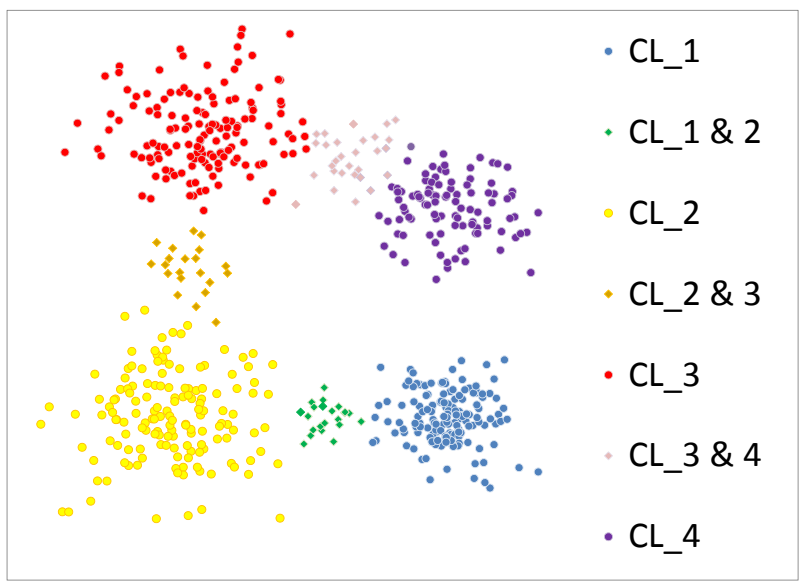}
\caption{Example of Clusters with pairwise Overlapping}\label{synthetic_clusters}
\end{figure}

\begin{figure}
\centering
\includegraphics[width=0.59\linewidth]{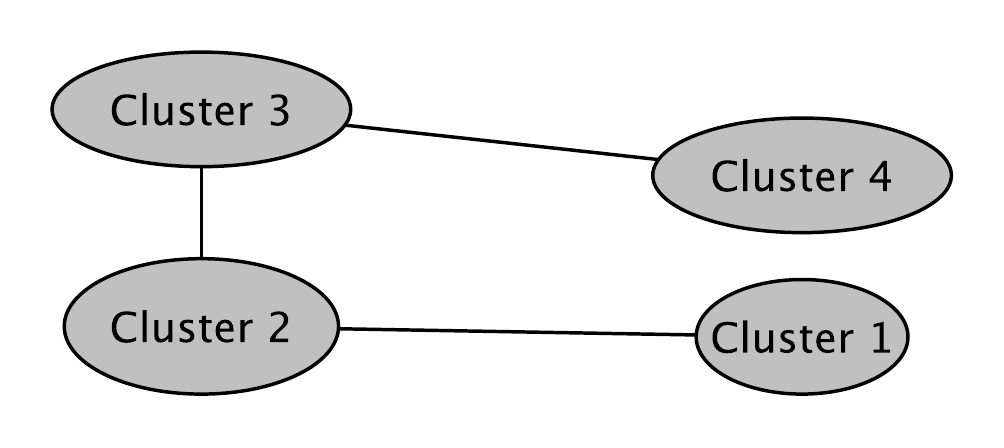}
\caption{Graph of Relations between the Clusters in the Example from Figure \ref{synthetic_clusters}}\label{synthetic_graph}
\end{figure}

Figure~\ref{synthetic_clusters} shows the synthetic example of clustering elements on a plane. In this example, we can see four main clusters. There are some additional points between clusters $C_1$ and $C_2$, $C_2$ and $C_3$ and $C_4$, that can be assigned either to the first cluster or to the second one. In other words, these pairs of clusters have pairwise overlapping between them. The graph of relations which is build based on these overlapping between clusters is presented in Figure~\ref{synthetic_graph}. Note that in this example the distance between clusters $C_1$ and $C_4$ is smaller than the distance between clusters $C_2$ and $C_3$. However, clusters $C_1$ and $C_4$ are well separated and have no elements in their overlapping, thus, they are not connected in the graph (Figure~\ref{synthetic_graph}), while clusters $C_2$ and $C_3$ are connected because they have lots of elements in their overlapping. 

This synthetic example shows that constructing the graph structure of the clusters based on their overlapping can discover hidden information that can be missed by the standard euclidean distance measure.

This problem can be addressed by overlapping clustering. However, in the ordinal overlapping clustering, each object can be assigned to a certain number of clusters. In case if an object assigned to a large number of clusters, it usually means that this object is far from all clusters and does not contribute much to their means. Thus, the overlapping between a large number of clusters makes it difficult to analyze the relations between clusters. Therefore, for the purpose of solving the particular problem of discovering clusters relations, it is reasonable to restrict the maximal number of clusters to which algorithm can assign each object. The easiest way is to set the threshold to two clusters. In this case, pairwise overlapping between clusters can be interpreted as edges in the graph of clusters relations.

In this paper, we focus on the problem of discovering pairwise relations between clusters based on their overlapping. We propose a {\it pairwise overlapping} modification of the $k$-means that allows to assign each element to {\it only} one or two clusters. Therefore, in additional to standard clustering results, our method provides a graph of clusters relations based on the pairwise overlapping between them. The proposed optimization algorithm uses the advantages of  $k$-means approach. In particular, it has an objective function and alternating between ``Assignment'' and ``Update'' steps it converges to the local minimum of its objective function in a finite number of steps. 

In addition, our pairwise overlapping clustering algorithm allows defining the parameter that specifies the level of overlapping between the pairs of resulting clusters. We present a formula for calculating a parameter of the proposed algorithm based on the desired relative size of overlapping interval between means of two adjacent clusters, i.e., {\it level of overlapping}.

In order to show the effectiveness of the proposed algorithm, we tested it on two types of data. In particular, we present the results of applying this algorithm to the problem of constructing Hedge Funds indices and the restaurant's categorization problem. We show that the proposed algorithm produces adequate and easy-interpretable results. In both applications, our method discovered a reasonable graph structure of the resulting clusters. We also provide interpretations of the obtained results in terms of other aspects of objects, such as funds' strategies or restaurants word descriptions.

The rest of the paper is organized as follows. In Section \ref{prior} we discuss the prior work in the domain of overlapping clustering. In Section \ref{algorithm} we present our new pairwise overlapping clustering algorithm. Section \ref{exp} demonstrates the results of applying our algorithm to the financial and restaurants data. And finally, Section~\ref{conc} concludes our findings.

\section{Prior work}\label{prior}
The overlapping clustering problem experienced extensive growth since it was introduced in 1971 by Jardine and Sibson in \cite{2_Jardine}. 
One of the most popular directions of constructing overlapping clustering is formulated as a graph decomposition problem that was studied in such papers as \cite{7_Andersen:2012:OCD:2124295.2124330,6_Khandekar:2012:AOC:2247370.2247412}, where authors solve the problem of minimization graph's conductance, \cite{8_Szalay-Beko:2012:MPC:2348041.2348049} determines overlapping network module hierarchy, \cite{9_Gregory:2008:FAF:1431932.1431981} finds overlapping communities in networks, or \cite{10_Obadi:2010:TRS:1913791.1913897} that presents hierarchical clustering algorithm.
The next important group of overlapping clustering methods is based on the probabilistic approach that was studied in such papers as \cite{11_Meila:2001:ECM:599609.599627} that presents the Naive Bayes Model, \cite{12_Battle:2004:PDO:974614.974637} that proposes the probabilistic relational models (PRMs),  \cite{13_Banerjee:2005:MOC:1081870.1081932,14_Fu:2009:BOS:1674659.1676996} generalizes mixture model method to any other exponential distribution, \cite{15_Shafiei:2006:LDC:1193207.1193349} presents the Multiple Cause Mixture Model.

Furthermore, in \cite{17_Cleuziou,18_Cleuziou:2013:OMB:2422997.2423064} authors proposed a modification of $k$-means for constructing overlapping clustering. This algorithm is based on the idea to use centers of not only single clusters but also groups of a certain number of clusters, such that each element assigned to a group of clusters minimizing the objective function. The method presented in \cite{17_Cleuziou,18_Cleuziou:2013:OMB:2422997.2423064} operates with certain heuristics to find the optimal value of the objective function.

Finally, \cite{whang2015nonexhaus} proposes an objective function that can be viewed as a reformulation of the traditional k-means objective, with easy-to-understand parameters that capture the degrees of overlap and non-exhaustiveness. Authors present iterative algorithm which they call NEO-K-Means (Non-Exhaustive Over- lapping K-Means).

In comparison to all these previous works in overlapping clustering, we propose the pairwise overlapping clustering algorithm that focuses on the particular problem of discovering pairwise relations between clusters based on their pairwise overlapping. In particular, we restrict the maximal number of clusters to which we can assign an element by two and, therefore, allow {\it only} pairwise overlapping between clusters.

There is some prior work on constructing the graph of relations between resulting clusters that was proposed in \cite{5_Bauman}. However, the clusters graph that they construct inherits the relations from the initial graph of the elements. While, the algorithm proposed in this paper constructs the graph of clusters, where the presence or absence of an edge between two clusters shows if they are connected or well separated from each other.

In conclusion, although there is prior work on constructing overlapping clustering, the proposed algorithm is the first that focuses on the particular problem of discovering the relations between clusters based on their pairwise overlapping.

In the next section, we present the specifics of our algorithm.


\section{Discovering the Clusters Graph Structure Using Pairwise Overlapping Method}\label{algorithm}

We consider a problem of {\it pairwise overlapping} clustering on a finite set of elements to $k$ clusters in order to discover the graph structure of the resulting clusters. We define that there is an edge between two clusters $c_i$ and $c_j$ (graph vertexes) if the number of elements in the pairwise overlapping between $c_i$ and $c_j$ exceeds a threshold specified by the number of elements in the minimal cluster from $c_i$ and $c_j$, i.e. $|c_i\cap c_j| > \gamma \cdot  min(|c_i|;|c_j|)$, where $\gamma$ is a parameter that can be set according to the desired sensitivity level of the model. 
In the absence of knowledge about the experimentation domain, parameter $\gamma$ is commonly set to $0.05$ or $0.1$.

In the rest of the section, we describe the algorithm for constructing the pairwise overlapping clustering. 

\subsection{Pairwise Overlapping Clustering: State of the Problem.}

Let  $X = \{x_1,x_2,\dots,x_N\}$ be a finite set of $n$-dimensional vectors $x_j\in R^n, ~ j = 1,\dots,N.$ {\it {Pairwise overlapping}} clustering $\mathcal{H}$ is specified by the assignment matrix $H = ||h_{i,j}||_{k,N}$, where 
$$h_{i,j} = 
\begin{cases} 
1, & \mbox{if } x_j \mbox{ belongs to cluster } c_i \\
0, & \mbox{otherwise,} 
\end{cases}
$$
$$\text{and } 1\leq \sum_{i=1}^k h_{i,j} \leq 2, j=\overline{1,N}.$$

Therefore, each object $x_j$ belongs to one or two clusters from $\mathcal{H}$. 

We assume that each cluster $c_i \in \mathcal{H}$ is described by a certain prototype (mean) $\alpha_i$ -- $n$-dimensional vector, which further will be chosen by optimization of the objective function.
Therefore, the problem of constructing a pairwise overlapping clustering on a set of $N$ elements to $k$ clusters constitutes identifying matrix $H=||h_{i,j}||_{k,N}$ and set of vectors (means) $A=(\alpha_1,\dots,\alpha_k )$ that minimize the following objective function:

\begin{multline}\label{criterion}
J(H;A)=\sum_{j=1}^N\left[\frac{\sum_{i=1}^k (x_j-\alpha_i)^2\cdot h_{i,j}}{\left (\sum_{i=1}^k h_{i,j}\right )^m}\right ], \text{where }\\
h_{i,j}\in \{0;1\}, i=\overline{1,k}; 
\text{ and } 1\leq \sum_{i=1}^k h_{i,j} \leq 2,~j=\overline{1,N}.
\end{multline}

The main idea of the criterion \ref{criterion} is to optimize the sum of the average square distances from each element to the centers of clusters that it belongs to. Note that there are only one or two non-zero summands in the numerator and in the denominator of formula \ref{criterion}.
Parameter $m$ in \ref{criterion} determines the level of overlapping between clusters in the optimal clustering. For example, if  $m=1$  then the optimal clustering should be a partition of the set  $X$. Increasing parameter $m$ leads to increase of the uncertainty in the resulting clustering.

\begin{theorem}\label{optimalTheorem}
For a given finite set $X = \{x_1,\dots,x_N\}$ of $n$-dimensional vectors $x_j\in R^n,$ if matrix $H^*=||h_{i,j}^*||$ and set $A^*=\{\alpha_1^*,\dots,\alpha_k^* \}$ are the optimal matrix and the optimal set of means for the objective function $J(H;A)$ in form of the equation \ref{criterion}, then

\begin{enumerate}
\item for each element $x_j$ and two closest means $\alpha_{i_1}^*\in A^*$  and  $\alpha_{i_2}^*\in A^*$, where $(x_j-\alpha_{i_1}^* )^2<(x_j-\alpha_{i_2}^* )^2$, matrix $H^*$ should satisfy the following conditions:
\begin{itemize}
\item $h_{i_1,j}^*=1,$ $h_{i_2,j}^*=0$ ($x_j$ belongs to cluster $c_{i_1}$), if  $(x_j-\alpha_{i_1}^* )^2 < \frac{(x_j-\alpha_{i_1}^* )^2+(x_j-\alpha_{i_2}^* )^2} {2^m}$

\item $h_{i_1,j}^* = h_{i_2,j}^* = 1$ ($x_j$ belongs to $c_{i_1}$ and $c_{i_2}$), \\if  $(x_j-\alpha_{i_1}^* )^2\geq \frac{(x_j-\alpha_{i_1}^* )^2+(x_j-\alpha_{i_2}^* )^2}{2^m}$

\item $h_{i,j}^*=0$, if  $ i \notin \{i_1, i_2\}.$
\end{itemize}

\item means of the clusters $\alpha_i^*\in A^*$ satisfy the following equation:
\begin{equation}\label{mean}
\alpha_i^*=\frac{\sum_{j=1}^N x_j \frac{h_{i,j}^*}{\left (\sum_{t=1}^k h_{t,j}^* \right )^m}}
{\sum_{j=1}^N \frac{h_{i,j}^*}{\left (\sum_{t=1}^k h_{t,j}^* \right )^m}}
\end{equation}

\end{enumerate}

\end{theorem}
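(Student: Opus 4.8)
The plan is to exploit the separable structure of the objective \ref{criterion} and to argue via the two standard necessary conditions for a joint minimizer: at the optimum $(H^*,A^*)$, the matrix $H^*$ must minimize $J(\cdot\,;A^*)$ over admissible assignment matrices, and the set $A^*$ must minimize $J(H^*;\cdot)$ over all choices of means, since otherwise one of the two coordinate blocks could be perturbed to strictly decrease $J$. I would then establish the two parts of the theorem by carrying out these two block minimizations explicitly.

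For Part 1, I would fix $A=A^*$ and observe that $J(H;A^*)=\sum_{j=1}^N \Phi_j(h_{\cdot,j})$, where $\Phi_j$ depends only on the $j$-th column of $H$; hence $H^*$ may be optimized column by column. For a single column there are only two admissible shapes. If $x_j$ is assigned to exactly one cluster, $\Phi_j$ equals $(x_j-\alpha_i^*)^2$ and is minimized by the nearest mean, giving the value $d_1:=(x_j-\alpha_{i_1}^*)^2$; if $x_j$ is assigned to two clusters, the denominator equals $2^m$ irrespective of which pair is chosen, so $\Phi_j$ is minimized by the two nearest means, giving the value $(d_1+d_2)/2^m$ with $d_2:=(x_j-\alpha_{i_2}^*)^2$. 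In particular, any cluster other than $c_{i_1}$ and $c_{i_2}$ must receive $h_{i,j}^*=0$. Comparing the two candidate values then yields exactly the dichotomy in the statement: the one-cluster assignment is strictly optimal precisely when $d_1<(d_1+d_2)/2^m$, and otherwise the two-cluster assignment is optimal (in the boundary case $d_1=(d_1+d_2)/2^m$ the two values coincide, and the statement simply adopts the two-cluster convention).

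For Part 2, I would fix $H=H^*$ and rewrite $J(H^*;A)=\sum_{i=1}^k \sum_{j=1}^N w_{i,j}\,(x_j-\alpha_i)^2$ with nonnegative weights $w_{i,j}=h_{i,j}^*\big/\big(\sum_{t=1}^k h_{t,j}^*\big)^m$; this is a sum of independent convex quadratic functions, one in each $\alpha_i$. Setting the gradient with respect to $\alpha_i$ equal to zero gives $\sum_{j=1}^N w_{i,j}(x_j-\alpha_i)=0$, whose solution --- for any non-empty cluster, i.e.\ whenever $\sum_{j} w_{i,j}>0$ --- is the weighted centroid in equation \ref{mean}, and convexity guarantees that this critical point is the global minimizer.

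The whole argument is essentially routine once the separability over $j$ (for Part 1) and over $i$ (for Part 2) has been noticed; the only places that need some care are the boundary case $d_1=(d_1+d_2)/2^m$ in Part 1 --- and, more cosmetically, ties among the squared distances $(x_j-\alpha_i^*)^2$, which never change the optimal value of $\Phi_j$ --- together with the degenerate case of an empty cluster in Part 2, for which equation \ref{mean} is vacuous. I do not expect any substantial obstacle beyond this bookkeeping.
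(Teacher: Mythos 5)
Your proposal is correct and follows essentially the same route as the paper's own (very terse) proof: column-wise separability of the objective for the assignment step, and the stationarity condition $\partial J/\partial\alpha_i=0$ for the means. Your version is in fact more careful than the paper's, since you spell out the two candidate column shapes, the tie-breaking at $d_1=(d_1+d_2)/2^m$, and the empty-cluster degeneracy, none of which the paper addresses explicitly.
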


\begin{proof}
 The proof consists of two parts:
\begin{enumerate}
\item Each object $x_j$ has it's corresponding part in the objective function \ref{criterion}. If we have fixed means the problem of assigning $x_j$ to the optimal clusters that minimize objective function \ref{criterion} can be done independently for each summand, which is actually done by the rules specified in the first part of the theorem.

\item The optimal $\alpha_i^*$ should satisfy the equation: $\frac{\partial J(H;A)}{\partial \alpha_i} = 0.$ Therefore, we come up to the following equation: 
$$2\alpha_i^*\sum_{j=1}^N\frac{h_{i,j}^*}{(\sum_{t=1}^k h_{t,j}^*)^m} - 2\sum_{j=1}^N x_j \frac{h_{i,j}^*}{(\sum_{t=1}^k h_{t,j}^* )^m} =0,$$
that gives us formula \ref{mean} for the optimal mean $\alpha_i^*$.

\end{enumerate}
\end{proof}

\subsection{Pairwise Overlapping Clustering Algorithm.}

The presented algorithm has the same structure as the well-known $k$-means algorithm.
It uses an iterative refinement technique. Starting with an initial set of $k$ means $A^{(0)} = (\alpha_1^{(0)},\alpha_2^{(0)},...,\alpha_k^{(0)}),$ the algorithm proceeds by alternating between {\it Assignment} and {\it Update} steps. The initial set of $k$ means can be specified randomly or by some heuristics.

\smallskip

{\bf Assignment step.} Within the $t$-th iteration of the overlapping clustering algorithm on the Assignment step, we fix the values of $k$ means $A^{(t-1)} = \left(\alpha_1^{(t-1)},\alpha_2^{(t-1)},\dots,\alpha_k^{(t-1)}\right)$ from the previous iteration $(t-1)$ and minimize the objective function $J\left(H;A^{(t-1)}\right)$ by finding the optimal matrix $H^{(t)}=||h^{(t)}_{i,j}||$, i.e. by assigning elements to the optimal number of closest clusters.

For each element $x_j$ optimal weights $h_{i,j}^{(t)}$ should satisfy the equations from the first part of theorem \ref{optimalTheorem}. Therefore, for each element  $x_j\in X$ we proceed with the following steps:
\begin{enumerate}
\item identify two closest means $\alpha^{(t-1)}_{i_1} , \alpha^{(t-1)}_{i_2} \in A^{(t-1)}$, where $(x_j-\alpha^{(t-1)}_{i_1})^2<(x_j-\alpha^{(t-1)}_{i_2})^2$
\item set weights $h^{(t)}_{i,j}$ according the following rules:
\begin{itemize}
\item $h_{i_1,j}^{(t)}=1,$ $h_{i_2,j}^{(t)}=0$ (assign $x_j$ to $c_{i_1}^{(t-1)}$), \\if  $(x_j-\alpha_{i_1}^{(t-1)} )^2 < \frac{(x_j-\alpha_{i_1}^{(t-1)} )^2+(x_j-\alpha_{i_2}^{(t-1)} )^2} {2^m}$
\item $h_{i_1,j}^{(t)} = h_{i_2,j}^{(t)} = 1$ (assign $x_j$ to $c_{i_1}^{(t-1)},~c_{i_2}^{(t-1)}$), \\if  $(x_j-\alpha_{i_1}^{(t-1)} )^2\geq \frac{(x_j-\alpha_{i_1}^{(t-1)} )^2+(x_j-\alpha_{i_2}^{(t-1)} )^2}{2^m}$
\item $h_{i,j}^{(t)}=0$, if  $ i \notin \{i_1, i_2\}.$
\end{itemize}

\end{enumerate}

\smallskip

{\bf Update step.} Within the $t$-th iteration of the overlapping clustering algorithm on the Update step we fix the matrix $H^{(t)}$ obtained on the Assignment step and minimize the objective function  $J(H^{(t)};A)$ by finding optimal values of $A^{(t)}$.

According to formula \ref{mean} and similarly to $k$-means clustering algorithm \cite{1_Bezdek:1981:PRF:539444} we set $\alpha_i^{(t)}$ to the mean of the cluster $c_i^{(t)}$ using the following formula: 
\begin{equation}
\alpha_i^{(t)}=\frac{\sum_{j=1}^N x_j \frac{h_{i,j}^{(t)}}{\left (\sum_{t=1}^k h_{t,j}^{(t)} \right )^m}}
{\sum_{j=1}^N \frac{h_{i,j}^{(t)}}{\left (\sum_{t=1}^k h_{t,j}^{(t)} \right )^m}}
\end{equation}

The proposed pairwise overlapping clustering algorithm terminates when the Assignment step and the Update step stop changing the coverage and means of the clusters.

\smallskip
\begin{theorem}\label{convergence}
The proposed pairwise overlapping clustering algorithm converges to a certain local minimum of the objective function \ref{criterion} in a finite number of steps.
\end{theorem}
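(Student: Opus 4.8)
The plan is to run the classical Lloyd-type argument: show that the two steps form a block coordinate descent on $J$, that the set of admissible assignment matrices is finite, and that a uniqueness property of the Update step prevents the iteration from cycling. First I would establish monotonicity of $J$ along the run. By the first part of Theorem~\ref{optimalTheorem}, for fixed means $A^{(t-1)}$ the contribution of each element $x_j$ to the objective function~\ref{criterion} is minimized independently, and the three cases of the Assignment step select, for every $j$, the cheaper of ``$x_j$ in its nearest cluster'' and ``$x_j$ in its two nearest clusters''; hence $H^{(t)}$ globally minimizes $H\mapsto J(H;A^{(t-1)})$ over admissible matrices, so $J(H^{(t)};A^{(t-1)})\le J(H^{(t-1)};A^{(t-1)})$. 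By the second part of Theorem~\ref{optimalTheorem}, for fixed $H^{(t)}$ the map $A\mapsto J(H^{(t)};A)$ is a sum of convex quadratics in the individual means whose joint minimizer is given by formula~\ref{mean}; that minimizer is precisely $A^{(t)}$, so $J(H^{(t)};A^{(t)})\le J(H^{(t)};A^{(t-1)})$. Chaining the two inequalities gives $J(H^{(t)};A^{(t)})\le J(H^{(t-1)};A^{(t-1)})$ for every $t$.

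Next I would invoke finiteness. Every column of an admissible $H$ carries exactly one or two ones among its $k$ entries, so there are at most $\bigl(k+\binom{k}{2}\bigr)^{N}$ admissible matrices, a finite set, and formula~\ref{mean} attaches to each admissible $H$ a \emph{unique} set of means (adopting the convention that the centre of an empty cluster is carried over unchanged, which does not affect $J$). Since $A^{(t)}$ is this deterministic function of $H^{(t)}$ for all $t\ge 1$, the sequence of states $(H^{(t)},A^{(t)})$ lives in a finite set; being non-increasing, $J$ can then take only finitely many values along the run, and it must eventually stabilize: there is $T$ with $J(H^{(t)};A^{(t)})=J(H^{(T)};A^{(T)})$ for all $t\ge T$.

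Finally I would turn stabilization of $J$ into genuine termination, and this is the step I expect to be the main obstacle, since \emph{a priori} a block coordinate descent could loop forever through distinct configurations of equal cost. Fix $t>T$: the two inequalities above are now equalities, so in particular $A^{(t-1)}$ attains the minimum of $A\mapsto J(H^{(t)};A)$, whose minimizer is unique on the non-empty clusters; hence $A^{(t-1)}=A^{(t)}$. Because the Assignment step is a single-valued deterministic function of the current means (fixing, say, a lexicographic rule when two distances coincide), applying it to $A^{(t)}=A^{(t-1)}$ reproduces $H^{(t)}$, so $H^{(t+1)}=H^{(t)}$ and then $A^{(t+1)}=A^{(t)}$: the pair $(H,A)$ is a fixed point and the algorithm stops. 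The number of steps before this happens is bounded by the number of distinct values of $J$, hence by $\bigl(k+\binom{k}{2}\bigr)^{N}$, so termination occurs in finitely many steps. At the fixed point $H$ minimizes $J(\cdot;A)$ and $A$ minimizes $J(H;\cdot)$, so neither step can lower the objective — this is the sense in which the limit is a local minimum of~\ref{criterion}. Note the argument really does depend on the strict (per-coordinate) convexity of the Update subproblem in part~2 of Theorem~\ref{optimalTheorem} and on the Assignment rule being single-valued; relaxing either could, in principle, allow oscillation among equal-cost states.
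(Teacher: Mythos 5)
Your proposal is correct and follows essentially the same route as the paper's proof, which is a two-sentence Lloyd-type argument: both steps are non-increasing for $J$, and the set of admissible assignment matrices is finite. In fact your version is more careful than the paper's, since you explicitly close the gap between ``$J$ stabilizes'' and ``the iterates stop changing'' by using the uniqueness of the Update minimizer and a deterministic tie-breaking rule in the Assignment step to rule out cycling among equal-cost configurations --- a point the paper's proof silently assumes.
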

\begin{proof}
Both the Assignment and the Update steps of the algorithm reduce the objective function \ref{criterion} until it reaches a local minimum. Since the set of the all possible pairwise overlapping clusterings is finite, then the algorithm converges in a finite number of steps.
\end{proof}


\subsection{Setting the Overlapping Level.}\label{overlap_level}

Parameter $m$ in the pairwise overlapping clustering objective function~\ref{criterion} determines the degree of overlapping between the resulting clusters. If $m=1$, then the optimal clustering will be a partition of the set $X$. In the case of $m\rightarrow \infty$, most of the elements $x_j\in X$ will be assigned to a pair of the resulting clusters. Therefore, the question is how to set an appropriate value of $m$ in order to get the desired level of overlapping between clusters.

Let's consider a pair of adjacent clusters $(c_1, c_2)$ in the optimal pairwise overlapping clustering $\mathcal{H}$ and assume that all elements $x_j\in X$ belonging to the interval $I=[\alpha_1,\alpha_2]$ between the means of this clusters $\alpha_1$ and $\alpha_2$, belong either to $c_1$ or to $c_2$ or to the overlap of $c_1$ and $c_2$. We denote by interval $I_1$ points of the interval $I$ belonging to $c_1$, by $I_2$ points of the interval $I$ belonging to $c_2$, and by $I_{1,2}$ points of the interval $I$ belonging to the overlap between $c_1$ and $c_2$. 

For the points $x\in I$ we define the following functions:
$g_1(x) = (x - \alpha_1)^2;$
$g_2(x) = (x - \alpha_2)^2;$
$g_{1,2}(x) = \left(\frac12\right)^m\left((x - \alpha_1)^2+(x - \alpha_2)^2\right).$
According to the assignment step of the overlapping clustering algorithm for point $x$ we claim the following: (a) if $g_1(x)\leq g_{1,2}(x)$ then $x$ belongs to $c_1$; (b) if $g_2(x)\leq g_{1,2}(x)$  then $x$ belongs to $c_2$; (c) if $g_1(x) > g_{1,2}(x)$ and $g_2(x) > g_{1,2}(x)$  then $x$  belongs to the overlap of clusters $c_1$ and $c_2$.
 
Further, we calculate lengths $l(I_1)$, $l(I_2)$ and $l(I_{1,2})$ of specified intervals $I_1$, $I_2$ and $I_{1,2}$ by solving the following equations: $g_1(x) = g_{1,2}(x)$ and $g_2(x) = g_{1,2}(x).$
As a result we get:
$$l(I_1)=l(I_2)=\frac{1}{1+\sqrt{2^m-1}}l(I)$$
$$\mbox{ and }~l(I_{1,2})=\left(1-\frac{2}{1+\sqrt{2^m-1}}\right)l(I).$$

The relative length of overlapping interval $I_{1,2}$ is equal
$$r_{overlap}=\frac{l(I_{1,2})}{l(I)}=\left(1-\frac{2}{1+\sqrt{2^m-1}}\right).$$
Therefore, parameter $m$ can be represented in the following form:
\begin{equation}\label{param_m}
m = \log_2\left(\left(\frac{1+r_{overlap}}{1-r_{overlap}}\right)^2+1\right).
\end{equation}

Formula~\ref{param_m} determines parameter $m$ for the pairwise overlapping clustering algorithm based on the desired relative size of the overlapping interval between the means of two adjacent clusters, i.e., {\it level of overlapping}. For example, 
\begin{itemize}
\item in order to get $r_{overlap} =\frac13$ (in this case $l(I_1)=l(I_2)=l(I_{1,2})$) we should set $m=\log_2 5 \approx 2.33$
 
\item in order to get $r_{overlap} =\frac12$ we should set $m=\log_2 10 \approx 3.32$
 
\item in order to get $r_{overlap} = 0$ (the hard clustering) we should set $m=\log_2 2 = 1$.
\end{itemize}

Usually, in the absence of experimentation or domain knowledge, $m$ is commonly set to $2$ or $3$. In these cases, the level of overlapping would be equal $0.268$ and $0.415$ respectively.

\section{Experiments}\label{exp}

In order to demonstrate how well our algorithm of discovering graph structure of the resulting clusters works in practice, we tested it on two types of applications. The first one is the problem of constructing Hedge Funds Indices and the second one is the restaurant categorization problem. We present the experimental settings and the results for these two applications in sections~\ref{finance_experiment} and \ref{restaurant_experiment} respectively.

\subsection{Discovering the Relations between Hedge Funds Indeces.}\label{finance_experiment}

One of the most important problems of Hedge Funds research is the problem of constructing Hedge Funds Indices. In particular, Hedge Funds Research Inc.\footnote{www.hedgefundresearch.com} works on this problem and constructed a variety of aforementioned indices. 

Most of the indices are constructed in the following way: 1) identify a certain homogeneous market segment, and 2) construct an index as an average value of the key assets from this segment. Therefore, the process of constructing adequate indices that describe the market is reduced to building a good segmentation of the market and computing the centers of these segments. These centers are considered as the indices. Since these macro indices represent means of their clusters, they have more stable and predictable behavior than individual funds.
One of the most important problems in the study of financial indices is the problem of predicting their values. The discovered relations between financial indices might contribute to this prediction problem. The most common way to calculate those relations based on the correlations between indices. However, the connections that are established based on the pairwise overlapping between indices are more stable and are not depend on the temporal state of the financial market. 

\begin{table*}
\centering
\caption{Matrix $M_{strategy}$}\label{m_strategy}
\begin{tabular}{|l|c|c|c|c|c|c|c|c|c|} \hline
Strategy&$All$&$C_1$&$C_2$&$C_3$&$C_4$&$C_5$&$C_6$&$C_7$&$C_8$\\ \hline
Equity Market Neutral&11\%&3\%&1\%&0\%    &8\%&0\%&0\%&4\%&{\bf33\%}\\ \hline
Fundamental Growth&28\%&19\%&{\bf36\%}&{\bf64\%}&22\%&{\bf75\%}&27\%&{\bf37\%}&14\%\\ \hline
Fundamental Value&39\%&{\bf63\%}&25\%&15\%&{\bf48\%}&15\%&{\bf53\%}&35\%&32\%\\ \hline
Energy/Basic Materials&5\%&1\%&{\bf35\%}&15\%&2\%&1\%&2\%&8\%&4\%\\ \hline
Technology/ Healthcare&6\%&8\%&1\%&4\%&{\bf12\%}&1\%&1\%&6\%&7\%\\ 
\hline\end{tabular}
\end{table*}

\begin{table*}
\centering
\caption{Matrix $M_{RIF}$}\label{m_rif}
\begin{tabular}{|l|c|c|c|c|c|c|c|c|c|} \hline
RIF&$All$&$C_1$&$C_2$&$C_3$&$C_4$&$C_5$&$C_6$&$C_7$&$C_8$\\ \hline
North America&48\%&{\bf60\%}&28\%&9\%&{\bf56\%}&4\%&{\bf72\%}&45\%&48\%\\ \hline
Asia ex-Japan&9\%&5\%&6\%&9\%&4\%&{\bf72\%}&4\%&9\%&4\%\\ \hline
Russia/ Eastern Europe&4\%&1\%&{\bf21\%}&{\bf38\%}&1\%&1\%&1\%&2\%&1\%\\ \hline
Western Europe/UK&5\%&2\%&6\%&0\%&7\%&0\%&1\%&4\%&{\bf11\%}\\
\hline\end{tabular}
\end{table*}

In this study, we applied the proposed algorithm of pairwise overlapping clustering to the Hedge Funds data in order to identify the relations between constructed indices. 
As the source of data we use HFR Database\footnote{www.hedgefundresearch.com/index.php?fuse=hfrdb}. We collect the set of all Hedge Funds that use the ``Equity Hedge Strategy", where ``Equity Hedge Strategy" means that they maintain positions both long and short in primary equity and equity derivative securities. Overall in our data, we have $855$ monthly time-series of returns for $855$ funds over the period of time from $06/2007$ till $05/2010$ ($36$ months in total).

First, we run the pairwise overlapping clustering algorithm on the time-series data using returns for each month as individual features. We set the number of clusters to $k=8$, the level of overlapping to $r_{overlap}=\frac13$, and parameter $\gamma = 0.1$. As a result, we get (a) clusters of Hedge Funds, (b) the centers of the clusters that can be interpreted as Hedge Funds macro indices, and (c) the graph of relations between constructed indices. 

In order to show that our algorithm provides an adequate separation of funds into clusters we compare the results or the pairwise overlapping clustering with two attributes of Hedge Funds: (a) strategies that are actually sub-strategies of ``Equity Hedge Strategy"; (b) Regional Investment Focus. 
First, based on the $5$ main strategy types (out of $8$ in total) and for the $8$ resulting clusters we build $5\times8$ matrix $M_{strategy}$ of correspondence between strategy types and clusters, that is presented in Table \ref{m_strategy}. Each entry $(i, C_j)$ in this matrix contains the number of funds that have the $i$-th strategy type and correspond to the cluster $C_j$, normalized by the total number of funds in cluster $C_j$. The first column $All$ of matrix $M_{strategy}$ contains the numbers of funds having $i$-th strategy type normalized by the total number of funds. Entry $(i,C_j)$ is marked in bold if it's significantly higher than $(i,All)$. In this case, funds from cluster $C_j$ use the $i$-th strategy type more often than funds corresponding to other clusters. For example, we can say that $75\%$ of funds corresponding to cluster $C_5$ use ``Fundamental Growth" strategy.

Further, $4\times8$ matrix $M_{RIF}$ presented in Table~\ref{m_rif} shows the correspondence between the Regional Investment Focus (RIF) and clusters. We calculate it for the main $4$ RIF types (out of $13$ in total) and on the $8$ resulting clusters. 

Based on the matrices $M_{strategy}$ and $M_{RIF}$ we can define an interpretation of clusters in terms of strategy types and Regional Investment Focuses.
For instance, funds that belong to clusters $C_1$, $C_4$ and $C_6$ mainly use ``Fundamental Value" strategy and their Region Investment Focus is mainly in North America. However, funds from cluster $C_4$ in addition use ``Technology/Healthcare" strategy. Funds from clusters $C_2$ and $C_3$ use ``Fundamental Growth" strategy and their primary RFI is ``Russia/Eastern Europe". Funds that belong to cluster $C_2$ also use ``Energy/Basic Materials" strategy. Further, funds from clusters $C_5$ and $C_7$ use ``Fundamental Growth" strategy, and funds from $C_5$ have RIF is ``Asia ex-Japan". Finally, funds from cluster $C_8$ mainly use ``Equity Market Neutral" strategy and their RIF is ``Western Europe/UK".
Note that we do not use fund's strategy and RIF features while building the pairwise overlapping clustering. However, our algorithm constructs clusters that appear to be an adequate representation of the funds' separation in terms of their main strategy and their RIFs. 

\begin{figure}
\centering
\includegraphics[width=0.99\linewidth]{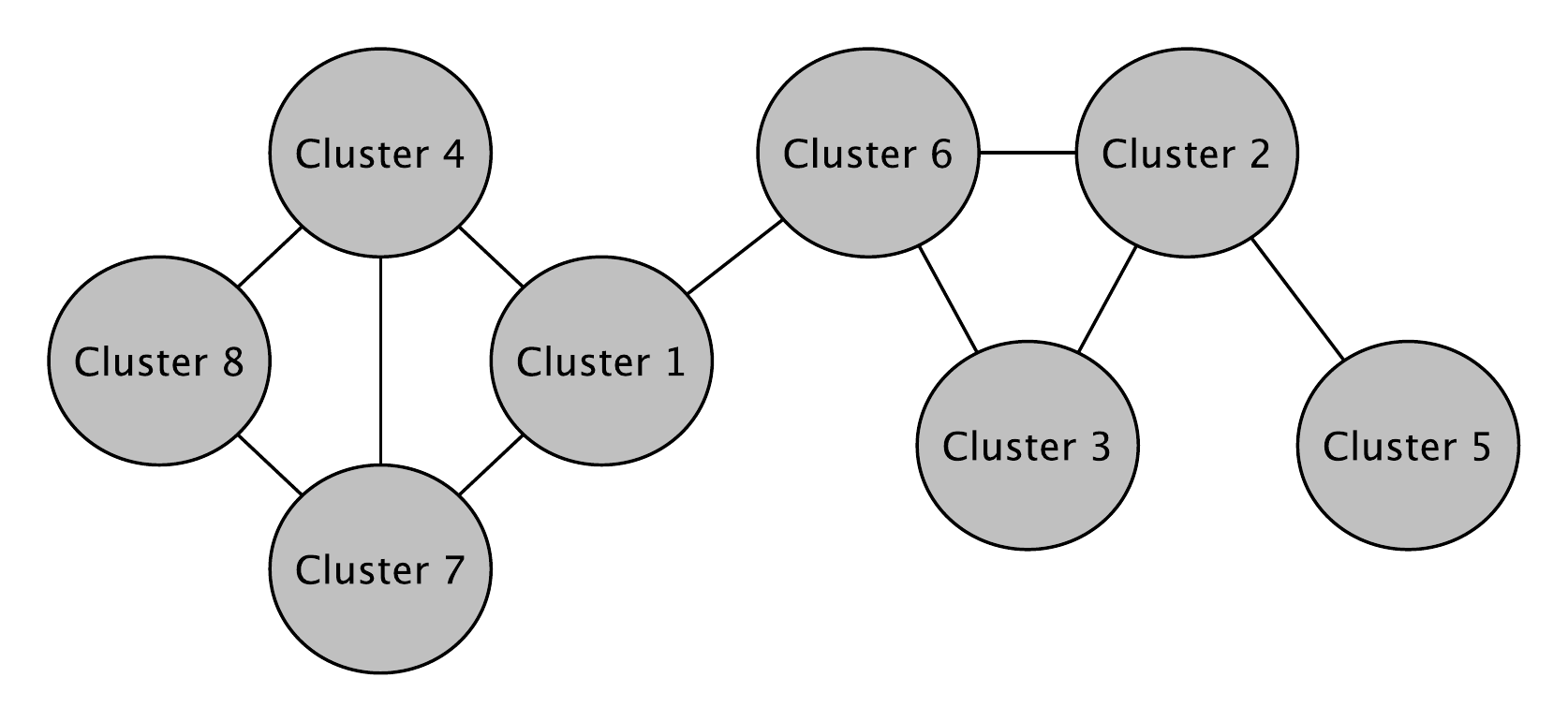}
\caption{Graph of the Pairwise Overlapping between Clusters of Hedge Funds}\label{financial_graph}
\end{figure}

The graph of clusters relations that was discovered based on the pairwise overlapping between clusters is presented in Figure \ref{financial_graph}. 
As we can see, the cluster $C_8$ with focus on ``Western Europe/UK" stands on the left and has connections to the clusters $C_4$ and $C_7$ with focus to ``North America," whereas clusters $C_2$ and $C_3$ with focus on ``Russia / Eastern Europe" stand on the right and have connections to both ``North America" focused cluster $C_6$ and ``Asia ex-Japan" focused cluster $C_5$. 
Further, although clusters $C_4$ and $C_6$ share the same regional investment focus, they are not connected with an edge. It means that these clusters are well separated by our method and have only small number of funds in the overlapping. This difference can be explained in terms of funds' strategy, where in contrast to cluster $C_6$ cluster $C_4$ actively uses ``Technology/Healthcare" strategy. However, not all edges and absences of edges can be explained in the two discussed parameters (Strategy and RIF), which shows that our method discovers additional hidden connections between clusters by analyzing their pairwise overlapping.  

In conclusion, based on the time-series of Hedge Funds returns we build a pairwise overlapping clustering of funds. We showed that the resulting clustering is adequate separation of funds in terms of strategies and regional investment focuses. Moreover, we construct a graph structure of the resulting clusters and showed that the edges in that graph adequately describe the connections between clusters discovering hidden information about their relations.


\subsection{Discovering the Relations between Clusters of Restaurants.}\label{restaurant_experiment}

For the second experiment, we used restaurant application. We address the problem of discovering the relations between restaurant categories that can be built automatically by clustering.
The knowledge of such relations can be useful for the recommendation purposes. For example, ``Italian restaurants" category may have a relation to the ``Fast-food-Pizza" category since there are some restaurants that correspond to both of this categories. In this case, for the user who likes to visit Italian restaurants in the evening, we may recommend Pizzeria at the lunch time. 
 

In our study we used the Yelp\footnote{www.yelp.com} data that was provided for the Yelp Dataset Challenge\footnote{www.yelp.com/dataset\_challenge}. In particular, we used all the reviews that were collected in the Phoenix metropolitan area in Arizona over the period of $6$ years for all the $4503$ restaurants ($158430$ reviews). In addition, all restaurants have a set of specified categories, such as ``Burgers", ``Chinese", ``Sushi Bars" etc. For our study, we selected $36$ different categories that contain at least $50$ restaurants. Further, we applied our algorithm of discovering graph structure to restaurants data as follows. 

Firstly, for each restaurant $r_i$ we collect a set of reviews $S_i$ and clean these reviews from stop-words that are too generic and unlikely help us to identify the restaurant's categories.
We next applied the well-known LDA approach \cite{Blei:2003:LDA:944919.944937} using sets $S_i$ as documents and obtained $40$ topics, representing distributions of words. Some of them directly refer to the restaurant's cuisine, e.g. \{mexican, salsa, taco, beans, tacos\}, \{pita, hummus, greek, feta\}, \{seafood, shrimp, fish, crab\}, but some of them refer to other aspects of user experience in a restaurant, e.g. \{atmosphere, cool, patio, friends, outside, outdoor\}, \{sports, tv, game, football, wings, watch\}. At the end of this step for each restaurant $r_i$ we assign a $40$-dimensional vector according to the distribution of the resulting topics in the set of reviews~$S_i$.

On the next step, we run the proposed pairwise overlapping clustering algorithm on the set of vectors from the previous step using parameter $r_{overlap}=\frac13$. Since our algorithm can converge to a local minimum of criteria function, we ran it $100$ times starting from randomly selected points. Our final result defined as the best result of the objective function~\ref{criterion} from $100$ runs. Finally, we construct a graph of clusters based on their pairwise overlapping using parameter $\gamma = 0.1.$

In order to examine the quality of the pairwise overlapping clustering algorithm in this particular application, we compare the resulting clusters with the categorization of restaurants provided by Yelp. This analysis shows that there are $26$ (out of $36$ in total) categories that have intersections with corresponding clusters in more than $50\%$ of the restaurants, and $7$ of those categories have intersections with corresponding clusters in more than $80\%$ of restaurants. It means that separation constructed by our method is adequate in terms of real categories.

Furthermore, for each cluster of restaurants, we identify the set of the most important features based on the distribution of corresponding topics discussed above using some threshold level. Therefore, each cluster is described in a set of $1 - 5$ topics.

\begin{figure}
\centering
\includegraphics[width=0.99\linewidth]{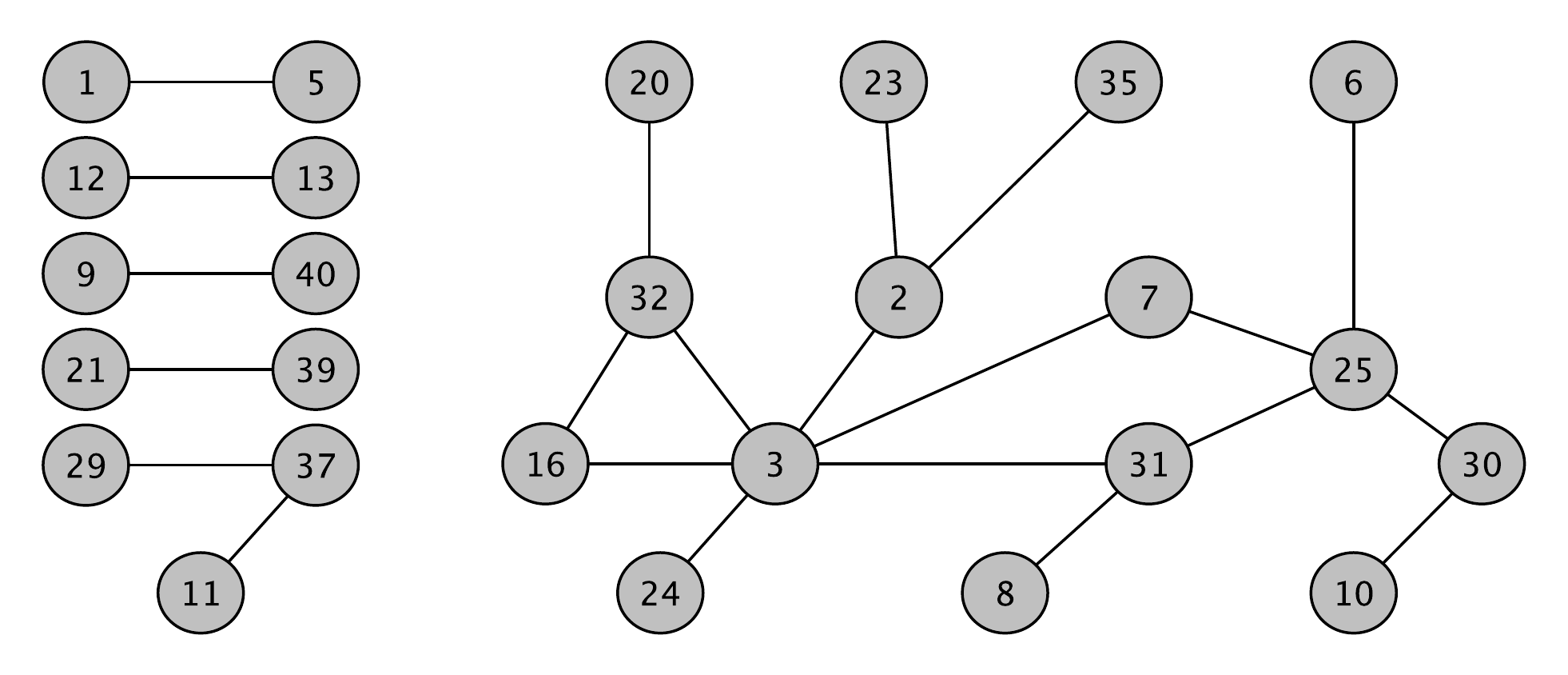}
\caption{Graph of the Pairwise Overlapping between Clusters of Restaurants}\label{restaurants_graph}
\end{figure}

The discovered graph of clusters connections is presented in the Figure~\ref{restaurants_graph}. For the simplification, we eliminated the clusters that are too small (have less that $20$ objects) and the clusters that have no connections to other clusters. The presented graph has $6$ components, where most of them represent connections between 2 or 3 clusters. For example, cluster 21, which is described with topics \{coffee, iced, yougurt\} and \{flavors, creamy, fruit\}, connected to the cluster 39, which is described with topics \{flavors, creamy, fruit\} and \{chocolate, vanilla, cake\}. As you can see, this clusters are pretty similar in terms of topics and, therefore they are connected.

The largest component of the discovered graph contains 15 clusters. Our method identified that they are not strongly connected, but there are some connections through other clusters. For example,  clusters 3, 31 and 25 represent a chain. Clusters 3 and 31 share the same topic \{menu, delicious, restaurant\}, clusters 31 and 25 share topics \{wine, bottle, glass\} and \{server, ordered, table\}, while clusters 3 and 25 have no important topics in common. Although the standard euclidean distance between clusters 3 and 25 ($0.00384$) is less than the distance between clusters 31 and 25 ($0.00496$), these clusters are not connected with an edge. It shows that our algorithm of discovering relations between clusters differs the simplest approach based on the euclidean distance. 

In conclusion, we constructed a clustering of restaurants based on the words that are used in the corresponding reviews and also discovered a graph of relations between the resulting clusters. The separation is adequate in terms of standard categorization and the clusters graph adequately represents clear connections and discovers some hidden ones.

\section{Conclusion}\label{conc}

In this paper, we presented a new algorithm for discovering graph of relations between clusters. In particular, we proposed a {\it pairwise overlapping} clustering algorithm that focuses on this particular discovery problem. This algorithm is a modification of the $k$-means that allows to assign each element to {\it only} one or two clusters. 
We constructed the corresponding optimization algorithm and proved that alternating between ``Assignment'' and ``Update'' steps it converges to a certain local minimum of the objective function in a finite number of steps.

Furthermore, the presented pairwise overlapping clustering algorithm allows defining the parameter that specifies the level of overlapping between the resulting clusters. We present the formula for calculating this parameter based on the desired relative size of overlapping interval between the means of two adjacent clusters, i.e., {\it level of overlapping}. 

Finally, we tested the effectiveness of the proposed algorithm on two types of data. In particular, we presented the results of applying this algorithm to the problem of constructing Hedge Funds Indices and to the restaurant's categorization problem. We showed that our algorithm produced adequate and easy-interpretable results and discovers a reasonable graph structure of the resulting clusters.


\bibliographystyle{abbrv}
\bibliography{overlap_lib}

\end{document}